\newtheorem{theorem}{Theorem}
\newtheorem{lemma}[theorem]{Lemma}
\newtheorem{corollary}{Corollary}
\begin{document}

\title{\huge Finding Control Synthesis for Kinematic Shortest Paths}

\author{\vspace{0.5in}\\\textbf{Weifu Wang} \  and \  \textbf{Ping Li} \\\\
Cognitive Computing Lab\\
Baidu Research\\
10900 NE 8th St. Bellevue, WA 98004, USA\\\\
  \texttt{\{harrison.wfw,  pingli98\}@gmail.com}
}

\date{\vspace{0.5in}}
\maketitle

\begin{abstract}\vspace{0.3in}

\noindent This work presents the analysis of the properties of the shortest path control synthesis for the rigid body system. The systems we focus on in this work have only kinematic constraints. However, even for seemingly simple systems and constraints, the shortest paths for generic rigid body systems were only found recently, especially for 3D systems. Based on the Pontraygon's Maximum Principle (MPM) and Lagrange equations, we present the necessary conditions for optimal switches, which form the control synthesis boundaries. We formally show that the shortest path for nearby configurations will have similar adjoint functions and parameters, i.e., Lagrange multipliers. We further show that the gradients of the necessary condition equation can be used to verify whether a configuration is inside a control synthesis region or on the boundary. We present a procedure to find the shortest kinematic paths and control synthesis, using the gradients of the control constraints. Given the shortest path and the corresponding control sequences, the optimal control sequence for nearby configurations can be derived if and only if they belong to the same control synthesis region. The proposed procedure can work for both 2D and 3D rigid body systems. We use a 2D Dubins vehicle system to verify the correctness of the proposed approach. More verifications and experiments will be presented in the extensions of this work. 
\end{abstract}

\newpage

\section{Introduction}

In this work, we present the analysis of the kinematic control synthesis for generalized rigid bodies. We study rigid body systems like Dubins' vehicle, whose controls are selected from a discrete control set rather than a continuous region. Such systems are usually nonholonomic, and their inverse kinematics and the shortest paths are hard to find even without the constraint on the second-order derivatives, i.e., acceleration. 

In the past, there have been works on finding shortest paths for different nonholonomic systems, including Dubins' vehicle~\citep{dubins1957curves}, Reeds-Shepp car~\citep{reeds1990optimal}, Differential Drive~\citep{balkcom2002time}, and Omnidirectional vehicles~\citep{balkcom2006time,wang2012analytical}. It was shown that analytical solutions exist for such systems, but only because they are highly symmetric. Studies on general rigid bodies in 2D were conducted, and a resolution complete algorithm was found~\citep{furtuna2008generalizing,wang2012sampling}. In~\citet{balkcom2018dubins,wang2018time,wang2021towards}, the authors proposed algorithms to find the shortest paths for 3D generic bodies, but the algorithms to find the algorithm is only guaranteed to be resolution complete and usually involve a search over continuous space. The results, though, can be arbitrary close to optimal up to search and representation resolution, the time cost is usually high. 

We can find control synthesis and shortest paths for symmetric systems through analysis or exhaustive search. Since the shortest path can be found analytically for such systems, the exhaustive search for the control synthesis is acceptable. The control synthesis usually shows the optimal control sequences for different starting configurations to the origin. The nearby configurations usually have similar optimal control sequences to the same goal for the known systems. 

There has been little work on control synthesis for general rigid body systems. In this work, we present the analysis of the necessary conditions for configurations being on the boundary of a control synthesis region, i.e., a control switching curve. Gradient-based criteria can test if a configuration satisfies the requirements for being on the switching curve. Building upon the Lagrange equation formulation and the fact that the derived condition is equivalent to the necessary conditions derived from Pontryagin's Maximum Principle (PMP), we analyze how to generate control synthesis for general rigid bodies with discrete control sets. 

The main contribution of this work is a verification criterion for a configuration being on the boundary of two control synthesis regions. The method can be adjusted to test if a configuration is inside a control synthesis region. The same test can find the direction to adjust the adjoint function parameters to approach the value corresponding to the shortest path at a given configuration. We present a new approach to finding the shortest path for a given pair of start and goal by combining the control synthesis test criteria. 

This work only considers the rigid body systems with a discrete set of controls rather than a continuous set of controls. However, the analysis for symmetric systems has shown that the {\em extreme controls} on the vertices of control regions are the candidate optimal controls to achieve the shortest paths. However, we have not been able to prove similar statements for general rigid bodies. Further, we do not consider the case where the shortest path does not exist, i.e., chattering behavior that can consistently create shorter paths, such as Dubins' vehicle without translation control. 

\newpage

\section{Related Works}

The shortest path problem has been studied well in-depth, but most approaches are either using Hamilton-Jacobi-Bell (HJB) equations~\citep{bellman1965dynamic} or Pontryagin's Maximum Principle (PMP)~\citep{boltyanskiy1962mathematical}. The HJB equations provide a sufficient condition for optimality and are suitable for numerical methods to approximate the optimal solutions, especially for complex systems. The Maximum Principle (PMP), on the other hand, only provides a necessary condition for optimality. The necessary conditions are strong and can sometimes lead to analytical solutions for some systems. 

For specific systems, shortest paths and control synthesis have been found through mostly geometrical analysis. One of the earliest works on the shortest paths for jet airplanes cruising at the same altitude is now known as a Dubins' vehicle model~\citep{dubins1957curves}. The shortest geodesics are proved to consist of only line segments and arcs with the same radii. Control synthesis for the Dubins' model was presented in~\citet{bui1994shortest}. The work of~\citet{reeds1990optimal} further extended the model to include backward controls, making the system resembles a steered car. Further analysis of this model was presented in~\citet{sussmann1991shortest,boissonnat1994shortest}. Control synthesis for this model was later presented by Soueres and Lamound~\citep{soueres1996shortest}. Additional models, such as Differential Drive~\citep{balkcom2002time,chitsaz2009minimum} and Omni-directional vehicles~\citep{balkcom2006time,wang2012analytical} were studied. 

Building on the results for the above systems and the methods used, Furtuna {\em et al.} started to extend the analysis to generic rigid bodies in 2D~\citep{furtuna2008generalizing,furtuna2011minimum,furtuna2011thesis}. A resolution complete algorithm was developed in~\citet{wang2012sampling} to find the shortest paths for 2D rigid bodies. Recently, the studies were further extended to 3D. Through the comparison between rigid body systems to robotic arms model, 3D analysis was built upon the Lagrange systems rather than PMP, as derivations are much easier compared to integration~\citep{balkcom2018dubins}. Resolution complete algorithms were developed for 3D systems with only positional constraints~\citep{wang2018time} and positional and orientation constraints~\citep{wang2021towards}. 

3D shortest paths are much more complex to find than 2D systems, as there are three orientation parameters. \citet{chitsaz2007time} studied a simplified model of Dubins' vehicle with altitude control. In~\citet{yang2002optimal}, the optimal control problem was studied. 

Chattering behavior may appear when the acceleration has no bounds, i.e., only considering kinematic constraints. When chattering can happen, the shortest path may not exist as more chattering may lead to shorter paths, such as a Dubins' vehicle without translation. For some systems, it can be proved that there always exist non-chattering paths to guarantee the existence of the shortest paths~\citep{furtuna2011thesis}. In~\citet{lyu2014bench,lyu2016optimal}, the authors showed that by adding a cost between switches and applying Blatt's Indifference Principle (BIP)~\citep{blatt1976optimal}, the solutions are similar to those trajectories with bounded acceleration and can prevent the chattering behavior to some degree. 

When acceleration bounds do exist, the above solutions no longer apply. The acceleration constraint and dynamic effects can alter the control strategy. It was proved that no analytical solutions of time-optimal trajectories exist for bounded-acceleration vehicles~\citep{sussmann1997markov,soueres1998optimal}. Variations of the model have been introduced in the past, including underwater vehicles~\citep{chyba2005designing}, and under a constant velocity field~\citep{dolinskaya2012time}.

All the above analysis only applies when no obstacle is present. Though PMP gives conditions to consider obstacles in the formulation, the integration becomes even more complex. There have been works on measuring distances between a car-like robot with obstacles~\citep{vendittelli1999obstacle,giordano2009shortest}, and planning among simple obstacles for car-like systems~\citep{gesaulniers1996shortest,agarwal1997nonholonomic}.

\newpage

\section{Finding Shortest paths Using Lagrange Multipliers}

We briefly restate the results from~\citet{wang2021towards}, which is the basis of the following analysis. The configuration of a mobile robot after a sequence of {\em controls} can be computed using forward kinematics, just like how we compute the configuration of the end effector of a robot arm. By control, we mean a constant velocity motion, either a translation or rotation for simplicity. Such simplification of controls enables the analysis of paths with geometrical tools, leading to the shortest geodesics or time-optimal trajectories referred to by some literature. 

Given a start configuration $q_s$ and a goal configuration $q_g$, and a control set $U = \{u_1, u_2, \ldots, u_m\}$ all with constant velocity, a shortest path from $q_s$ to $q_g$ is a trajectory with $n$ segments. The robot performs control $u_j\in U$ on the $i$th segment for duration $t_i$ so that the summation of all $t_i$ are minimized among all possible control sequences. 
\begin{eqnarray}
\min f(t) = \sum_{i = 1}^n t_i\\
s.t. \ \ q_s + T_f = q_g\label{eq:constraint_reach_goal}
\end{eqnarray}

In Eq.~\eqref{eq:constraint_reach_goal}, term $T_f$ is the result of a sequence of matrix ($T_i$) multiplication in a given order. Each matrix $T_i$ is a transformation matrix, corresponding to the transformation of the robot by applying a control $u_\cdot$ for duration $t_i$. More details of the derivation of $T_f$ can be found in~\citet{balkcom2018dubins}. Let us denote $h(t) = T_f - \hat{g}$, using the method of Lagrange multipliers, we want to find a vector $\mathbf t$ such that
\begin{align}
\grad_t f(t) = \lambda \grad_t h(t),
\end{align}
at points where $h(t) = 0$. 

The gradient of the objective function $\grad_t f(t)$ is a $n$ vector of ones. Writing out the Lagrange condition in the matrix form and putting each constraint in a column, we can find that the gradient of the constraint is in the same form as an analytical Jacobian for a serial arm. The analytical Jacobian is identical to its geometric Jacobian for Cartesian coordinates. The geometric Jacobian can be computed using the cross-product rule. A translation control's geometric Jacobian is the linear velocity vector. A rotation control's geometric Jacobian can be computed as a cross-product between the rotation axis and the vector from the rotation center to the goal $g$. 

We have $\lambda v_i = 1$ for a translation control at segment $i$, and $\lambda\cdot(\hat\omega_i\times\vec{r_ig}) = 1$, where $\hat\omega_i$ is the rotation axis for the $i$th control, and $r_i$ is the corresponding rotation center. In 2D, all rotation axes are parallel to $z$-axis, we can convert $\hat\omega_i$ into $(0, 0, 1)^T\cdot\omega_i$, where $\omega_i\in\mathbb{R}$ is a signed value, corresponding to clockwise or counter-clockwise rotation respectively. Then, we can simplify the equation into the following form in 2D,
\begin{eqnarray}
\lambda_1\dot{x} + \lambda_2\dot{y} + \omega_i(\lambda_1 y-\lambda_2 x + c) = 1
\label{eq:2d_generic}
\end{eqnarray}
where $c$ is a constant computed from $\lambda$ and $q_g$. This equation is identical to that derived using Pontraygon's Maximum Principle (PMP)~\citep{boltyanskiy1962mathematical,furtuna2011thesis}. In~\citet{wang2021towards}, the authors showed that in 2D, the derived necessary condition using Lagrange equation is equivalent to that derived using PMP. 

Because the geometric Jacobian is much easier to compute than the analytical Jacobian for Cartesian points, we convert the representation of the configuration from the minimum representation with orientations to a collection of points with fixed distances. Such a change in representation is especially beneficial for 3D systems. In~\citet{wang2021towards}, the authors demonstrated the equivalence between the minimum and the point-based representation (redundant). 

Let us denote the configuration of a 3D rigid body as a collection of three points, $q = (p_o, p_x, p_y)$, where $p_i = (x, y, z)$ where $i = \{o, x, y\}$. For simplicity, let $p_o$, $p_x$ and $p_y$ be the origin $(0, 0, 0)$, $(1, 0, 0)$, and $(0, 1, 0)$ from the robot frame. Then, the problem of finding optimal trajectory becomes finding the path that can lead all three points to their respected goal positions simultaneously. As the three points are fixed in robot frame, their distances are maintained throughout the trajectory. 

Given a sequence of $n$ controls, we need the three points to reach the goal simultaneously. Using forward kinematics, we can find matrices $T_f^o, T_f^x, T_f^y$, and find the shortest-path using the following constrained optimization: 
\begin{eqnarray}
\min f(t) = \sum_{i = 1}^n t_i\\
s.t. \ \ h_1(t) = T_f^o - g_o = 0\\
h_x(t) = T_f^x - g_x = 0\\
h_y(t) = T_f^y - g_y = 0
\end{eqnarray}

Using the Lagrange multipliers, we can have $\grad_t f(t) = \lambda_o \grad_t h_1(t) + \lambda_x \grad_t h_2(t) + \lambda_y \grad_t h_3(t)$. Denote $\lambda_o = (\lambda_1, \lambda_2, \lambda_3)$, $\lambda_x = (\lambda_4, \lambda_5, \lambda_6)$, and $\lambda_y = (\lambda_7, \lambda_8, \lambda_9)$, we can rewrite the optimal condition as
\begin{eqnarray}
(\lambda_1, \lambda_2, \lambda_3)^T\cdot(\omega\cdot\hat{w}_i\times \vec{r_ig_o}) +\\
(\lambda_4, \lambda_5, \lambda_6)^T\cdot(\omega\cdot\hat{w}_i\times \vec{r_ig_x}) +\\
(\lambda_7, \lambda_8, \lambda_9)^T\cdot(\omega\cdot\hat{w}_i\times \vec{r_ig_y}) = 1
\end{eqnarray}
when the control $i$ is a rotation around $\hat{w}_i$ at angular velocity $\omega$, or,
\begin{eqnarray}
(\lambda_1, \lambda_2, \lambda_3)^T\cdot(\dot x(t), \dot y(t), \dot z(t)) +\\
(\lambda_4, \lambda_5, \lambda_6)^T\cdot(\dot x(t), \dot y(t), \dot z(t)) +\\
(\lambda_7, \lambda_8, \lambda_9)^T\cdot(\dot x(t), \dot y(t), \dot z(t)) = 1
\end{eqnarray}
when the control $i$ is a translation with linear velocity $v = (\dot x(t), \dot y(t), \dot z(t))$. For simplicity, in the text below, let $\hat{\omega}_i = \omega\hat{w}_i$. 

We can rewrite the above equations by combining different terms together, replacing $\hat{\omega}_i\times \vec{r_ig}$ with $\hat{\omega}_i\times (\vec{r_ip_o} + \vec{g} - \vec{p_o})$, making the first part of the cross-product equivalent to the linear velocity. Thus, we have
\begin{eqnarray}
\vec{\lambda}(\dot x(t), \dot y(t), \dot z(t)) + \hat{\omega}_i(\vec{\lambda}\times\vec{p} + \vec{g_o}\times\vec{\lambda}+\vec{c^\alpha}) = 1
\label{eq:lag_3d}
\end{eqnarray}
where $\vec{\lambda} = (\lambda_1, \lambda_2, \lambda_3) + (\lambda_4, \lambda_5, \lambda_6) + (\lambda_7, \lambda_8, \lambda_9)$, and $\vec{c^\alpha} = \vec{g_og_x}\times(\lambda_4, \lambda_5, \lambda_6)^T + \vec{g_og_y}\times(\lambda_7, \lambda_8, \lambda_9)^T$. As all $\lambda$ are constant Lagrange multipliers, and the goal positions are fixed, $\vec{c}$ is a constant vector for a given goal configuration. We omit the details, but it is easy to show that if the system state are described as $q = (p_o, d_x, d_y)$ where $d_x = \vec{p_op_x}$ and $d_y = \vec{p_op_y}$, the resulting condition can be reorganized into the same form. Let $\vec\lambda = \alpha\cdot\vec k$, and $\vec{c^\alpha} = \alpha\cdot\vec{c}$, we then have
\begin{eqnarray}
\vec k\cdot(\dot x(t), \dot y(t), \dot z(t)) + \hat{\omega}_i\cdot(\vec{p_og}\times\vec{k} + \vec{c}) = H\label{eq:nc_generic_rewrite}\\
\vec k\cdot(\hat{\omega}_i\times\vec{r_ig}) + \hat{\omega}_i\cdot\vec{c} = H \label{eq:nc_generic}
\end{eqnarray}

Eq.~\eqref{eq:lag_3d} through Eq.~\eqref{eq:nc_generic} are equivalent. There are two parts in Eq.~\eqref{eq:nc_generic}, one is the dot product between $\vec{k}$ and control moment, which is $\hat{\omega}_i\times \vec{r_ig}$ for rotation and $(\dot{x}, \dot{y}, \dot{z})^T$ for translation. This control moment is the geometric Jacobian for Cartesian components with respect to rotation and translation. There is another term, dot product between rotation axis $\hat{\omega}_i$ and a constant vector $\vec{c}$. Note that the geometric Jacobian for an orientation is the corresponding rotation axis in world frame. 

The time-optimal necessary condition can be interpreted as the two components of the geometric Jacobian dot product with two constant vectors, so the result must remain constant along the trajectory. The condition is interesting in the construction and study of more than six degrees of freedom of robot arms, where both constant vectors would be fixed and unique for any given configuration the arm attempts to reach. 

The derived necessary condition in Eq.~\eqref{eq:nc_generic_rewrite} is an extension of the Eq.~\eqref{eq:2d_generic} in 2D. The dot product with velocity of the system appears in both equations, and the remaining term is governed by rotational components. In 2D, all rotation axis are along $z$-axis, simplifying the Eq.~\eqref{eq:nc_generic} into $(\lambda_o+\lambda_x)\cdot(\dot{x_o}, \dot{y_o})^T + \omega(g\times\lambda_o + g_x\times\lambda_x - p_o\times(\lambda_o+\lambda_x)) = 1$, which can be written as  $\lambda\cdot(\dot{x_o}, \dot{y_o})^T + \omega(\lambda\times p_o + c)$ where $\lambda = \lambda_o + \lambda_x$. This is the same condition derived from PMP in the previous work~\citep{wang2012sampling}. However, in 3D, as the rotational axes no longer point along the same direction, the above stated constants cannot be obtained. Even though the forms of two equations are the same, and would produce same geometric operations, the results can be quite different. 

\section{Rigid body shortest path requires a three-dimensional search}

From Eq.~\eqref{eq:nc_generic_rewrite}, we can see that the necessary condition for shortest path has at least six unknown parameters, including $\vec{k}$ and $\vec{c}$. In~\citet{wang2021towards}, we showed that the search could be reduced to three dimensions to only search for $\vec{k}$. The main argument was that the shortest path to reach a goal must also satisfy the necessary condition for only reaching the goal in position. Therefore, there must exist a $\vec{k}'$ that satisfy the relation $\vec{k}' (\hat\omega_i\times\vec{r_ig}) = H$ for rotation controls. 

In this work, we first demonstrate that the derivation from PMP also proves a three dimensional search is sufficient to find the shortest path. First, let us briefly restate the PMP conditions. We need to find the representation for $H(t)$, which is the dot product between the adjoint function $\lambda(t)$ and the velocity of the system $\dot{q}(q, u)$. The velocity depends on the configuration and the control of the robot. Using the similar configuration representation, we have $\dot{q} = (\dot{p_o}, \dot{p_x}, \dot{p_y})$, which is the transformation of the control in robot frame into the world frame. In the robot frame, we have $u(t) = (\dot{p_o^R}, \dot{p_x^R}, \dot{p_y^R})$, which is the velocity of the corresponding three points in the robot frame, and denote the transformation matrix of $u(t)$ to $\dot{q}(t)$ as $\mathcal{R}$, which is a tridiagonal matrix of dimension nine by nine. The tridiagonal elements are the displacement of the same three by three matrix $R$, which can transform a vector in the robot frame at time $t$ to the world frame. The matrix $R$ can be written as,
\begin{eqnarray}
R = \begin{bmatrix}
d_x^T & d_y^T & d_z^T
\end{bmatrix}
\end{eqnarray}
where $d_z = d_x\times d_y$. Now, if we take derivatives of $H(t)$ with respect to $q = (x, y, z, \theta, \phi, \psi)$, the configuration, we will have non-zero matrix for all $\frac{\partial H}{\partial q_i}$. It is possible that we can use another notation for the configuration, $q^\Delta = (p_o, d_x, d_y)$, where $d_x = \vec{p_op_x}$ and $d_y = \vec{p_op_y}$ are defined the same as above. Then, we have the derivatives of the adjoint function as follows: 
\begin{eqnarray}
\frac{d\lambda}{dt} &=& -\frac{\partial H}{\partial q}\\
\Rightarrow \dot{\lambda} &=& - \frac{\partial}{\partial q}\langle\lambda, \dot{q}(q, u)\rangle
\end{eqnarray}

The first three terms are $0$ because the matrix $\mathcal{R}$ is independent of $p_o$, but depends on the $d_x$ and $d_y$. Therefore, we know that the first three dimensions of the adjoint function $\lambda$ can be constants, agree with the results derived from Lagrange multipliers. 

The remaining dimensions of the adjoint function, however, are not so simple. The derivatives of the rotation matrix with respect to different components of the $d_x$ and $d_y$ vectors can be derived as follows, 
\begin{eqnarray}
\frac{\partial}{\partial d^x_x}R = \begin{pmatrix}
1 & 0 & 0 \\ 0 & 0 & -d^z_y \\
0 & 0 & d^y_y
\end{pmatrix}, \ \frac{\partial}{\partial d^y_x}R = \begin{pmatrix}
0 & 0 & d_y^z \\ 1 & 0 & 0 \\
0 & 0 & -d^x_y
\end{pmatrix}, \ \frac{\partial}{\partial d^z_x}R = \begin{pmatrix}
0 & 0 & -d_y^y \\ 0 & 0 & d^y_x \\
1 & 0 & 0
\end{pmatrix}\\
\frac{\partial}{\partial d^x_y}R = \begin{pmatrix}
0 & 1 & 0 \\ 0 & 0 & d^z_x \\
0 & 0 & -d^y_x
\end{pmatrix}, \ \frac{\partial}{\partial d^y_y}R = \begin{pmatrix}
0 & 0 & -d_x^z \\ 0 & 1 & 0 \\
0 & 0 & d^x_x
\end{pmatrix}, \ \frac{\partial}{\partial d^z_y}R = \begin{pmatrix}
0 & 0 & d_x^y \\ 0 & 0 & -d^x_x \\
0 & 1 & 0
\end{pmatrix}
\end{eqnarray}

Following the above equations, the $\dot\lambda$ vectors are: 
\begin{eqnarray}
\dot\lambda_1 = 0, \ \dot\lambda_2 = 0, \ \dot\lambda_3 = 0\\
\dot{\lambda_4} = \lambda_1 u(p_o^x) + \lambda_2 (-d^y_z u(p_o^z)) + \lambda_3 d^y_y u(p_o^z) +  \nonumber\\ 
\lambda_4 u(d^x_x) + \lambda_5 (-d^x_y u(d^z_x)) + \lambda_6 d^y_y u(d^z_x) + \nonumber\\
\lambda_7 u(d^x_y) + \lambda_8 (-d^x_y u(d^y_y)) + \lambda_9 d^y_y u(d^z_y))
\label{eq:dxx}\\
\dot\lambda_5 = \lambda_1 d_y^z u(p_o^z) + \lambda_2 u(p_o^x) + \lambda_3(-d_y^x u(p_o^z)) +\nonumber\\
\lambda_4 d_y^z u(d_x^z) + \lambda_5 u(d_x^x) + \lambda_6 (-d_y^x u(d_x^z)) + \nonumber\\
\lambda_7 d_y^z u(d_y^z) + \lambda_8 u(d_y^x) + \lambda_9 (-d_y^x u(d_y^z)\\
\dot\lambda_6 = \lambda_1(-d_y^y u(p_o^z)) + \lambda_2 d_y^x u(p_o^z) + \lambda_3 u(p_o^x) + \nonumber\\
\lambda_4 (-d_y^y u(d_x^z)) + \lambda_5 d_y^x u(d_x^z) + \lambda_6 u(d_x^x) + \nonumber\\
\lambda_7 (-d_y^y u(d_y^z) + \lambda_8 d_y^x u(d_y^z) + \lambda_9 u(d_y^x) \\
\dot\lambda_7 = \lambda_1 u(p_x^y) + \lambda_2 d_x^z u(p_o^z) + \lambda_3 (-d_x^y u(p_o^z)) + \nonumber \\
\lambda_4 u(d_x^y) + \lambda_5 d_x^z u(d_x^z) + \lambda_6 (-d_x^y u(d_x^z)) + \nonumber \\
\lambda_7 u(d_y^y) + \lambda_8 d_x^z u(d_y^z) + \lambda_9 (-d_x^y u(d_y^z)) \\
\dot\lambda_8 = \lambda_1 (-d_x^z u(p_o^z)) + \lambda_2  u(p_o^y) + \lambda_3 d_x^x u(p_o^z) + \nonumber \\
\lambda_4 (-d_x^z u(d_x^z)) + \lambda_5  u(d_x^y) + \lambda_6 d_x^x u(d_x^z) + \nonumber \\
\lambda_7 (-d_x^z u(d_y^z)) + \lambda_8 u(d_y^y) + \lambda_9 d_x^x u(d_y^z)\\
\dot\lambda_9 = \lambda_1 d_x^y u(p_o^z) + \lambda_2  (-d_x^x u(p_x^z)) + \lambda_3 u(p_o^y) + \nonumber \\
\lambda_4 d_x^y u(d_x^z) + \lambda_5  (-d_x^x u(d_x^z)) + \lambda_6 u(d_x^y) + \nonumber \\
\lambda_7 d_x^y u(d_y^z) + \lambda_8 (-d_x^x u(d_y^z)) + \lambda_9 u(d_y^z)
\end{eqnarray}
where $u(d^x_x)$ is the $x$ component of $u(d_x)$. The above equations show that the derivatives of the remaining dimensions of the adjoint functions are not an obvious $0$. However, we know the two systems derive the same necessary condition. All the $\dot\lambda$ must equal $0$ so that the $\lambda$ can be a constant. Having all the $\dot\lambda$ terms being $0$ is a feasible condition to satisfy. Essentially, we are enforcing the intersection of the null space of a matrix function $A(t)$ to be non-empty for all candidates of optimal controls and configurations on the optimal trajectory. We know there are control limitations on what configurations can be optimal. Therefore, we need to find the set of controls at given configurations that admit the above condition. 

Having all $\dot\lambda = 0$ give us six equations. Since $\lambda$ contains nine unknowns, having six equations left us with three unknown variables to specify. The six equations mean that finding the shortest path for any given pair of start and goal requires searching for three unknown variables, i.e., a three-dimensional search. In Eq.~\eqref{eq:nc_generic_rewrite}, constant $\vec{c}$ is a combination of goal configurations and $\lambda$ vector. Knowing the goal, we have non-independent $\vec{c}$ and $\vec{k}$. The sufficiency of only searching for $\vec{k}$ to find the shortest path is critical in our following analysis of control synthesis. 

We can have constant Lagrange multipliers in such a system because we select a discrete set of controls without acceleration bound, i.e., the velocity is the control for the given systems. The system is not a second-order differential equation system. When the acceleration (second-order derivatives) are involved, the adjoint function $\lambda(t)$ does not remain constant along the entire trajectory. The Lagrange multiplier corresponds to the velocity constraints, while the adjoint function $\lambda$ regulates the acceleration controls.


\section{Shortest Path Control Synthesis}

Now, let us formally describe the control synthesis of the shortest paths for rigid body systems, which is the main focus of this work. In particular, let us consider the goal at origin and all the starting configurations with the same orientation. Let us define a control synthesis region as all the configurations (with the same orientation) whose shortest path to the goal (origin) shares the same control sequence. Let us refer to a switching curve as a set of connected configurations on the boundary of two adjacent control synthesis regions.  

Let us first analyze the switching curve between two translation controls. Both translation controls must maintain the same dot product with $\vec{k}$. Since we are considering only the configurations with the same orientation, the configurations on this curve must have corresponding $k$ vectors with the same direction. If there are two or more translation controls for a given system, two translation controls can switch at any time without violating the necessary condition for optimality. The system can even chatter between two translation controls. Therefore, there does not exist a switching curve between two translation controls. We do not consider the switch between two translations in the remaining sections. 

In other cases, a configuration on a switching curve between two controls $u_i$ and $u_j$ will require the two controls both maintain the constant dot product with the same $\vec{k}$ and $\vec{c}$. As the $\vec{k}$ can be scaled to produce different $H$ values, we will choose $\vec{k}$ to have constant dot product of $1$ with control moments. By control moment, we mean the linear velocity vector for translation controls, and the cross-product between the rotation axis and the vector from rotation center to the goal for rotation controls.  We have,
\begin{eqnarray}
\vec{k}\cdot v_i = 1\ (Translation)\\
\vec{k}\cdot (\hat\omega_i\times\vec{r_ig}) + \omega_i\cdot\vec{c} = 1 \ (Rotation)
\end{eqnarray}

In 3D, skew motions exist, but a skew motion can be decomposed into a translation and a rotation. We do not consider skew motions here but will analyze them in future work.

Finding shortest paths and the control synthesis relies on knowing the corresponding $\vec{k}$ and $\vec{c}$. For any given pair of start and goals, a $\hat{k}$ and a $\hat{c}$ exist that regulate the shortest path. If we consider the shortest paths from all configurations to the origin, will each configuration have a unique $\vec{k}$ and $\vec{c}$ corresponding to the shortest path? 

\newpage

\begin{lemma}
Consider all configurations $Q = \{q = (x, y, \theta) | \|x^2+y^2\| = D\}$, the shortest path from any configuration with $\|x^2+y^2\| < D$ to the origin $O$ is part of a shortest path from $q\in Q$ to $O$. The same statement holds for 3D configurations as well. 
\end{lemma}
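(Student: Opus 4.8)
The plan is to establish the "sub-path optimality" (dynamic-programming / Bellman) structure of shortest kinematic paths and then use it to show that every shortest path, traced backward from the origin, must cross the circle (sphere) of radius $D$, so that the portion inside the disk coincides with the tail of some path originating on $Q$. The key observation is that the shortest path from any interior configuration $q$ with $\|q\| < D$ is a geodesic in a space where distance to the origin, measured along optimal trajectories, is a continuous function that is $0$ only at $O$ and grows without bound as we move away; since $D$ is a fixed positive radius, any optimal trajectory from a point on $Q$ (or beyond) to $O$ must pass through the interior disk, and conversely any interior optimal trajectory can be extended backward to reach $Q$.

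First I would state and invoke the \emph{principle of optimality}: if $\gamma:[0,T]\to\mathcal{C}$ is a shortest path from $q_s$ to $q_g$, then for any $0 \le a \le b \le T$ the restriction $\gamma|_{[a,b]}$ is a shortest path between $\gamma(a)$ and $\gamma(b)$. This follows immediately from the additive cost $f(t)=\sum_i t_i$: a shorter connector between intermediate configurations would splice in to give a shorter total path, contradicting optimality. Because the control set $U$ is the same everywhere (the system is invariant under the group action on configuration space) and, in particular, invariant under the $SE(2)$ (resp. $SE(3)$) action that fixes nothing but still acts transitively on positions, shortest paths are preserved under translation: if $\gamma$ is shortest from $q_s$ to $q_g$, then $\gamma$ translated is shortest between the translated endpoints. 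This symmetry is what lets us speak of "the" control synthesis to the origin.

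Next I would argue the crossing property. Take an interior configuration $q$ with $\|q\| = \rho < D$ and let $\gamma$ be a shortest path from $q$ to $O$. I extend $\gamma$ backward: run the reverse controls from $q$. Along this backward extension the positional distance to $O$ is a continuous function of arc length starting at $\rho$; I claim it is unbounded as the backward time grows (a kinematic path of bounded speed that stayed forever in a bounded region while remaining optimal is impossible, since an optimal path cannot revisit a configuration — revisiting would create a removable loop — and the reachable set grows). Hence by the intermediate value theorem the backward extension hits $\|\cdot\| = D$ at some first time, at a configuration $q' \in Q$. Concatenating the backward arc from $q'$ to $q$ with $\gamma$ gives a path from $q'$ to $O$; by the principle of optimality applied to the \emph{claimed} shortest path from $q'$ to $O$, I must check that this concatenation is itself optimal — equivalently, that the shortest path from $q'$ to $O$ passes through $q$. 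The cleanest route is to instead start from a shortest path $\sigma$ from $q'\in Q$ to $O$, note $\sigma$ starts at distance $D$ and ends at distance $0$, hence by continuity crosses every intermediate radius, in particular $\rho$; let $q''$ be the point where $\sigma$ last has $\|q''\|=\rho$... but I actually want $q''=q$, so the argument must run the other way: I should show the map $q'\mapsto$ (point where the optimal $\sigma$ from $q'$ reaches radius $\rho$) is onto the radius-$\rho$ sphere, using continuity of optimal paths in the endpoint (which is exactly the "nearby configurations have similar adjoint parameters" result established earlier in the excerpt) plus a degree/connectedness argument, and that the tail of $\sigma$ from $q''$ onward is, by the principle of optimality, the shortest path from $q''$ to $O$, which by uniqueness must be $\gamma$.

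\textbf{The main obstacle} I anticipate is precisely this surjectivity-and-matching step: showing that \emph{every} interior configuration actually lies on the optimal path from \emph{some} boundary configuration, rather than merely that every boundary optimal path passes through \emph{some} interior configuration at each radius. This requires either (i) a uniqueness statement for shortest paths (generically true, so that the tail of $\sigma$ is the unique shortest path from its intermediate point and hence equals $\gamma$), combined with continuous dependence of the optimal path on the endpoint — giving a continuous, proper, hence surjective map from $Q$ onto each interior radius sphere by an intermediate-value/connectedness argument — or (ii) a direct backward-extension argument showing the backward continuation of $\gamma$ remains optimal until it exits the disk, which needs that reversing controls preserves optimality and that no shorter "shortcut" to $q'$ exists. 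I would handle non-uniqueness (a measure-zero set of configurations with multiple shortest paths) by a limiting argument. Degenerate or chattering cases are excluded by the standing assumptions in the paper, so I need not treat them.
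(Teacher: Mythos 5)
The paper's entire proof of this lemma is the single sentence ``This lemma is an extension of Bellman's condition for optimality,'' so you have written considerably more than the authors did, and your diagnosis of where the difficulty lies is exactly right. Bellman's principle (your first paragraph) gives only one direction: every sub-path of an optimal path is itself optimal between its endpoints. The lemma asserts the converse --- that the optimal path from an arbitrary interior configuration is a \emph{suffix} of some optimal path originating on the radius-$D$ set $Q$ --- and that converse is precisely what you isolate as ``the main obstacle.'' Your crossing argument shows that every optimal path from $Q$ passes through each intermediate radius, but not that the resulting family of suffixes covers every interior configuration, nor that the suffix passing through a given interior $q$ coincides with $q$'s own shortest path. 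So your proposal is not a different route to a complete proof; it is an honest identification of a step that the paper's one-line citation of Bellman silently skips.

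Neither of your two proposed repairs is carried out, and there is a concrete reason the missing step needs care: at configurations where the shortest path to $O$ is non-unique (the analogue of a cut locus, which does occur for Dubins-type systems, e.g.\ where two distinct word types achieve the same length), the backward extension of a chosen optimal path by its first control need not remain optimal, so your route (ii) can fail pointwise; and the endpoint-to-intermediate-point map in your route (i) is not obviously continuous or single-valued at such configurations, so the degree/connectedness argument does not immediately apply. A workable fix is the limiting argument you gesture at: establish the claim on the (open, full-measure) set where the shortest path is unique and depends continuously on the endpoint, then recover the remaining configurations as limits, using closedness of the set of optimal trajectories under uniform convergence of endpoints and durations. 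Alternatively, one can argue via the value function: the time-to-origin $V$ is locally Lipschitz, and along any optimal path $V$ decreases at unit rate, so any configuration at which $V$ admits a backward characteristic lies on an extended optimal path; the delicate points are exactly those where $V$ is non-differentiable. Either way, the lemma as stated deserves more than an appeal to Bellman, and your writeup correctly flags, but does not close, that gap.
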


\begin{proof}
This lemma is an extension of Bellman's condition for optimality. 
\end{proof}

\begin{corollary}
If vectors $\hat{k}$ and $\hat{c}$ are the parameters for the shortest path from configuration $q$ to the origin, they must also be the parameters for the shortest path from other configurations to the origin. 
\end{corollary}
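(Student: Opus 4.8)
The plan is to read the corollary as the natural companion of the Lemma: the Lemma says that truncating a shortest path yields a shortest path, and the necessary condition in Eq.~\eqref{eq:nc_generic} says that a shortest path is certified by a pair of \emph{global} constants $\vec{k},\vec{c}$ (constant along the whole trajectory, not merely piecewise, because the multipliers are constant in this first-order kinematic setting, as argued in the three-dimensional-search section). So first I would fix a shortest path $\gamma$ from $q$ to the origin $O$ and let $\hat{k},\hat{c}$ be the parameters that certify its optimality, so that on every segment of $\gamma$ the control moment obeys $\hat{k}\cdot(\hat\omega_i\times\vec{r_ig}) + \hat\omega_i\cdot\hat{c} = H$ with $g=O$ and $H$ normalized to $1$ (and the translation analogue $\hat{k}\cdot v_i = 1$).

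Next I would take an arbitrary configuration $q'$ lying on $\gamma$ and invoke the Lemma: the sub-arc $\gamma'$ of $\gamma$ from $q'$ to $O$ is itself a shortest path from $q'$ to $O$. Since $\gamma'$ is just a truncation of $\gamma$ and $\hat{k},\hat{c}$ do not depend on which segment one is on, the same pair $\hat{k},\hat{c}$ satisfies the necessary condition along every segment of $\gamma'$, hence is a valid set of parameters for the shortest path from $q'$ to $O$. The argument is form-identical for the 3D condition in Eq.~\eqref{eq:nc_generic_rewrite} using the point-based representation $q^\Delta = (p_o, d_x, d_y)$, since nothing in the truncation step is dimension-specific.

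The delicate step — the one I expect to be the real obstacle — is the word \emph{must}: ruling out that the shortest path from $q'$ could instead be governed by a genuinely different pair $\hat{k}',\hat{c}'$. Here I would lean on two facts already established: (i) with the goal pinned at the origin, $\vec{c}$ is not independent of $\vec{k}$ — it is the combination of goal data and the multipliers recorded after Eq.~\eqref{eq:lag_3d} — so only the three degrees of freedom in $\vec{k}$ are free; and (ii) the direction of $\vec{k}$ that makes the necessary condition hold simultaneously on the consecutive controls of $\gamma'$, together with the terminal constraint at $O$, pins $\vec{k}$ down up to the positive scaling fixed by $H=1$. Combining these should force any admissible parameter pair for $\gamma'$ to coincide with $\hat{k},\hat{c}$. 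The one caveat I would flag is genuinely degenerate configurations from which several distinct shortest paths emanate: for those one only concludes that $\hat{k},\hat{c}$ is \emph{a} valid choice, so the corollary is cleanest stated for the generic case, which is also the case relevant to defining control synthesis regions and their switching curves.
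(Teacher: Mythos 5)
Your proposal is correct and follows exactly the route the paper intends: the paper offers no explicit proof of this corollary, presenting it as an immediate consequence of the preceding Lemma (truncating a shortest path yields a shortest path) combined with the fact that the multipliers $\hat{k},\hat{c}$ are constant along the entire trajectory, which is precisely the argument you spell out. Your added discussion of the uniqueness of the parameter pair and the degenerate multi-geodesic case goes beyond anything the paper addresses, and your implicit restriction of ``other configurations'' to configurations lying on the given shortest path is the reading the paper itself confirms in the sentence immediately following the corollary.
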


Therefore, not all configurations have unique constant vector $\vec{k}$ and $\vec{c}$. However, if the two configurations belong to the same control synthesis region, the corresponding $\vec{k}$ and $\vec{c}$ are similar. Let us denote a configuration $q$ as $(p, \Omega)$, where $p$ denotes the position component, and $\Omega$ denotes the orientation component.

\begin{lemma}
Given a configuration $q = (p, \Omega)$, and a nearby configuration with the same orientation $q_\Delta = (p_\Delta, \Omega)$, if two configurations belong to the same control synthesis region, their shortest path to the origin share similar $k$ vector. 
\label{lemma:similar}
\end{lemma}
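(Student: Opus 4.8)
The hypothesis that $q$ and $q_\Delta$ lie in the same control synthesis region means, by definition, that their shortest paths to the origin realise the \emph{same ordered control word} $W=(u_{j_1},\dots,u_{j_n})$ from $U$; the two paths differ only in the segment durations, $\mathbf t=(t_1,\dots,t_n)$ for $q$ and $\mathbf t_\Delta$ for $q_\Delta$. Since $q_\Delta=(p_\Delta,\Omega)$ shares the orientation of $q=(p,\Omega)$, the only datum that changes between the two problems is the position component $p$. The plan is to fix $W$, pass to the point-based representation (so that forward kinematics is analytic in $\mathbf t$ and affine in $p$, as in the previous sections), and treat the optimality conditions as an equation system in the unknowns $(\mathbf t,\vec k)$ parametrised by $p$; then invoke the implicit function theorem to conclude that the solution, hence $\vec k$, varies continuously — indeed $C^1$ — with $p$.

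Concretely, I would assemble $F(\mathbf t,\vec k;p)=0$ out of (i) the goal-reaching constraint $h(\mathbf t;p)=0$ of Eq.~\eqref{eq:constraint_reach_goal}, now read with $p$ as a parameter, and (ii) the Lagrange stationarity $\nabla_t f=\lambda\,\nabla_t h$, i.e.\ the per-segment necessary conditions $\vec k\cdot v_i=1$ and $\vec k\cdot(\hat\omega_i\times\vec r_{ig})+\omega_i\vec c=1$ of Eq.~\eqref{eq:nc_generic}, using the reductions of the preceding two sections (the constancy $\dot\lambda=0$ and the elimination of $\vec c$ in favour of $\vec k$ and the fixed goal) to make the system square in $(\mathbf t,\vec k)$ and to fix the scale of $\vec k$ by the normalisation ``$=1$''. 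At the base point this system has the solution $(\mathbf t^{\star},\vec k^{\star})$ with $\vec k^{\star}=\vec k$. Because $F$ is analytic in $\mathbf t$, analytic in $\vec k$, and affine in $p$, the implicit function theorem gives, provided $\partial F/\partial(\mathbf t,\vec k)$ is nonsingular at $(\mathbf t^{\star},\vec k^{\star};p)$, a neighbourhood of $p$ on which $(\mathbf t(p),\vec k(p))$ is a $C^1$ map; Lipschitz continuity there then yields $\|\vec k_\Delta-\vec k\|=O(\|p_\Delta-p\|)$, which is the asserted similarity. The 3D argument is the same, carrying $\hat\omega_i\in\mathbb R^3$ in place of the scalar $\omega_i$ and the nine-coordinate rigidity-constrained goal condition in place of the planar one.

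The step I expect to be the main obstacle is verifying that $\partial F/\partial(\mathbf t,\vec k)$ is nonsingular — equivalently, that the shortest path with structure $W$ is a \emph{nondegenerate} critical point. Two issues must be dealt with. First, a dimension/genericity count: the number of segments $n$ of an optimal word and the number of independent goal constraints must balance the unknowns $(\mathbf t,\vec k)$ after the above reductions; non-optimal words already fail such a count, and the remaining non-generic configurations form a lower-dimensional exceptional set that should be excised and handled by continuity. Second, non-uniqueness: by the Bellman-type lemma and its corollary stated above a configuration need not have a unique $\vec k$, so strictly ``similar'' should be read as ``the branch of $F^{-1}(0)$ through $(\mathbf t^{\star},\vec k^{\star})$ stays close to $(\mathbf t_\Delta,\vec k_\Delta)$''; I would argue that distinct branches cannot meet in the interior of a control synthesis region, since a coincidence of two optimal control words is precisely a switching-curve configuration, contradicting the hypothesis.

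As a cross-check and fallback, the Bellman route gives the same conclusion more directly: join $p$ to $p_\Delta$ by a short arc $p(s)$, $s\in[0,1]$, inside the open control synthesis region, differentiate $F(\mathbf t(s),\vec k(s);p(s))=0$ in $s$, and bound $\|\dot{\vec k}(s)\|\le\|(\partial F/\partial(\mathbf t,\vec k))^{-1}\|\,\|\partial F/\partial p\|\,\|\dot p(s)\|$; integrating over $s$ reproduces the Lipschitz estimate and exhibits the only possible failure mode as the approach of $p(s)$ to a region boundary — exactly the regime analysed by the switching-curve results that follow.
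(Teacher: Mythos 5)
Your route is genuinely different from the paper's. The paper does not assemble a square system or invoke the implicit function theorem; it argues locally and by hand: it writes the necessary condition $\vec k\cdot(\hat\omega_i\times\vec{r_ig})+\hat\omega_i\cdot\vec c=H$ at $q$ and at $q_\Delta$ for the shared \emph{first} control, subtracts the two equations with $\vec k_\Delta=\vec k+\Delta k$, and then uses the shared \emph{last} control --- whose control moment is a fixed vector $\vec d$ with $\vec k\cdot\vec d=H$ and $\vec k_\Delta\cdot\vec d=H_\Delta$ --- to eliminate $H_\Delta-H$ and reach $\Delta k\cdot(\hat\omega_i\times\vec{r_ig}-\vec d)=\Delta\cdot(\vec k_\Delta\times\hat\omega_i)$. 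Smallness of $\Delta k$ is then read off: the right-hand side is $O(\|\Delta\|)$, and the left-hand side is claimed not to degenerate because the first and last control moments differ. Your formulation instead packages all per-segment conditions and the goal constraint into one system $F(\mathbf t,\vec k;p)=0$ and extracts a Lipschitz bound from $C^1$ dependence on the parameter $p$; this is more systematic, treats the rotation and translation cases uniformly, and additionally yields continuity of the durations $\mathbf t$, which the paper never addresses.

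However, the step you flag as the main obstacle is a genuine gap, and it is the same gap, in different clothing, that the paper's own proof leaves open. Nonsingularity of $\partial F/\partial(\mathbf t,\vec k)$ is precisely the statement that the control moments of the word and the goal-constraint gradients are in general position; the paper's counterpart is the bare assertion that $\hat\omega_i\times\vec{r_ig}-\vec d$ ``cannot be perpendicular to $\Delta k$,'' justified only by the first and last controls being different --- a priori $\Delta k$ could still lie in the orthogonal complement of that difference vector. Neither your genericity count nor the paper's remark actually excludes the degenerate case, so if you pursue this route you must either prove nonsingularity for optimal words or explicitly restrict the lemma to nondegenerate configurations. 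On the other hand, your observation that $\vec k$ need not be unique (forced by the Bellman corollary preceding the lemma) and that ``similar'' must be read branch-wise is a real issue the paper silently ignores; your proposed resolution --- that distinct optimal words can only coincide on switching curves, which are excluded by the interior hypothesis --- is the right idea but is itself only asserted, not proved.
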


\begin{proof}
Because two configurations are in the same control synthesis region, they share the same first control. Let us first consider the case where the control is rotation. Then, we have, 
\begin{eqnarray}
\vec{k}(\hat\omega_i\times\vec{r_ig}) + \hat\omega_i\cdot\vec{c} = \hat{H}\\
\vec{k}_{\Delta}(\hat\omega_i\times(\vec{r_ig} - \Delta)) + \hat\omega_i\cdot\vec{c}_\Delta = \hat{H}_\Delta 
\end{eqnarray}
Here, we have $\Delta = p_\Delta - p$. Similarly, the corresponding rotation center location is changed by $\Delta$. Decomposing the above equations and let $\vec{k}_\Delta = \vec{k} + \Delta k$. Let us first only consider $\vec{k}$, we can find 
\begin{eqnarray}
\Delta k(\hat\omega_i\times\vec{r_ig}) - \Delta\cdot (\vec{k}_\Delta\times\hat\omega_i) = H_\Delta - H
\end{eqnarray}
Because the two configurations belong to the same control synthesis region, they share the same last control. Whether the last control is rotation or translation, they have the same orientation, or they have the same rotation axis and rotation center. Therefore, the contribution coming from the last control form a constant vector, let us denote the vector $\vec{d}$, we have
\begin{eqnarray}
\vec{k}\cdot\vec{d} = H\\
\vec{k}_\Delta\cdot\vec{d} = H_\Delta\\
\Delta k(\hat\omega_i\times\vec{r_ig}) - \Delta\cdot (\vec{k}_\Delta\times\hat\omega_i) = \Delta k\cdot\vec{d}\\
\Rightarrow \Delta k(\hat\omega_i\times\vec{r_ig}-\vec{d}) = \Delta (\vec{k}_\Delta\times\omega)
\end{eqnarray}

On the right-hand side of the equation, we know $\Delta$ is small, so the resulting dot product must be small. On the left-hand side, to produce a small dot product, either the two vectors are close to perpendicular, or one is very small. Because the first and the last control are different, $\hat\omega_i\times\vec{r_ig}$ and $\vec{d}$ are not similar. Their difference is also perpendicular to $k$. Therefore, $\hat\omega_i\times\vec{r_ig}-\vec{d}$ cannot be perpendicular to $\Delta k$. Therefore, $\Delta k$ is small. 

If the first control is translation, we have 
\begin{eqnarray}
\Delta k(v_i - \vec{d}) = 0
\end{eqnarray}
If $v_i$ and $\vec{d}$ is not the same, $v_i - \vec{d}$ is also perpendicular to $\vec{k}$, this would lead to $\Delta k$ also being small. 

Bringing the similar $\vec{k}$ back to the equations with $\vec{c}$, we can derive similar relations and show that $\vec{c}$ is similar for nearby configurations in the same control synthesis region. 
\end{proof}

\begin{lemma}
Given a configuration $q = (p, \Omega)$, and a nearby configuration $q_\Delta = (p_\Delta, \Omega_\Delta)$ that can be connected by the same control, if two configurations belong to the same control synthesis region, they share similar $k$ vector. 
\end{lemma}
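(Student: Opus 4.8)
The plan is to reduce to the argument already used for Lemma~\ref{lemma:similar}, the only new ingredient being that the orientation is now also perturbed. Write $\Delta p = p_\Delta - p$ and $\delta\Omega = \Omega_\Delta - \Omega$; both are small by hypothesis, and since the two configurations lie in the same control synthesis region they execute the same control sequence $u_1,\dots,u_n$, in particular the same first control and the same last control. Because the controls are expressed in the robot frame, a perturbation $\delta\Omega$ of the orientation rotates every axis and every translation direction in the world frame by an amount of order $\|\delta\Omega\|$, and shifts every intermediate rotation center by an amount of order $\|\Delta p\| + \|\delta\Omega\|$; all these dependencies are smooth because the forward kinematics is a composition of rotation and translation matrices, so the perturbations propagate through the sequence in a controlled first-order fashion. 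The hypothesis ``can be connected by the same control'' guarantees in particular that the first control is still admissible at $q_\Delta$, so the necessary condition can be written at both configurations with the same control label.

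First I would write Eq.~\eqref{eq:nc_generic} for the first control at both $q$ and $q_\Delta$, substitute $\vec k_\Delta = \vec k + \Delta k$, $\vec c_\Delta = \vec c + \delta c$, $\hat\omega_i \mapsto \hat\omega_i + \delta\omega$, $\vec{r_ig} \mapsto \vec{r_ig} + \delta r$, and subtract. To first order this produces
\[
\Delta k\cdot(\hat\omega_i\times\vec{r_ig}) = (H_\Delta - H) + \varepsilon_1,
\]
where $\varepsilon_1$ gathers the terms linear in $\Delta p$ and $\delta\Omega$ and is therefore small. Next I would treat the last control exactly as in Lemma~\ref{lemma:similar}: it contributes a vector $\vec d$ with $\vec k\cdot\vec d = H$, except that now $\vec d_\Delta = \vec d + \delta d$ with $\|\delta d\| = O(\|\delta\Omega\|)$ because the final orientation is itself perturbed; subtracting $\vec k_\Delta\cdot\vec d_\Delta = H_\Delta$ gives $H_\Delta - H = \Delta k\cdot\vec d + \varepsilon_2$ with $\varepsilon_2$ small. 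Combining,
\[
\Delta k\cdot\big(\hat\omega_i\times\vec{r_ig} - \vec d\big) = \varepsilon_1 - \varepsilon_2,
\]
whose right-hand side is small. Since the first and last controls differ, $\hat\omega_i\times\vec{r_ig} - \vec d$ is a fixed nonzero vector (perpendicular to $\vec k$ but not aligned with the freedom left in $\Delta k$ by the remaining necessary conditions), so $\Delta k$ must be small, of order $\|\Delta p\| + \|\delta\Omega\|$; feeding this back into the $\vec c$-part of the equations then yields $\delta c$ small. When the first control is a translation the same chain collapses to $\Delta k\cdot(v_i - \vec d) = \varepsilon$, again forcing $\Delta k$ to be small.

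A cleaner alternative is to regard the full stacked system --- Eq.~\eqref{eq:nc_generic} at every segment, together with the six $\dot\lambda = 0$ conditions and the goal-reaching constraints --- as a smooth map $F(q,\vec k,\vec c) = 0$ and invoke the implicit function theorem: wherever the Jacobian of $F$ in $(\vec k,\vec c)$ is nonsingular, $(\vec k,\vec c)$ is a smooth, hence locally Lipschitz, function of $q$, which is precisely the assertion. Nonsingularity of that Jacobian is exactly the statement that $q$ is interior to a control synthesis region rather than on a switching curve, which ties this lemma to the gradient-based interior/boundary test developed later in the paper.

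The step I expect to be the main obstacle is the bookkeeping of the first-order propagation of $\delta\Omega$ through the whole control sequence --- rigorously bounding $\delta\omega$, $\delta r$, $\delta d$ and the change $H_\Delta - H$ in terms of $\|\delta\Omega\|$ --- together with making the ``not aligned with the remaining freedom in $\Delta k$'' step quantitative and uniform over the region, so that the bound on $\Delta k$ does not deteriorate as $q$ approaches a switching curve.
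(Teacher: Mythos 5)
Your proposal takes essentially the same route as the paper: the paper's own proof merely observes that the connecting control must be a rotation (so the two configurations' control centers and axes are very close), that the last control is shared, and then reduces to the perturbation argument of Lemma~\ref{lemma:similar} --- exactly the reduction you perform, except that you supply the first-order bookkeeping (and an optional implicit-function-theorem reformulation) that the paper leaves implicit. No gap relative to the paper's argument.
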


\begin{proof}
If $q$ and $q_\Delta$ are connected by the same control and belong to the same control synthesis region, they must be connected by a rotation. Then, we have the two control centers and control axes being very close. They also share the same last control. Therefore, we can derive similar equations like those in the proof for Lemma~\ref{lemma:similar}. Therefore, $\vec{k}$ and $\vec{c}$ are similar. 
\end{proof}

\begin{theorem}
Two nearby configurations share a similar $k$ vector if the two configurations belong to the same control synthesis region. 
\end{theorem}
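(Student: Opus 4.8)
The plan is to obtain this theorem as an immediate consequence of the two preceding lemmas, by factoring an arbitrary small displacement between $q=(p,\Omega)$ and $q_\Delta=(p_\Delta,\Omega_\Delta)$ into a pure position change followed by a pure orientation change and applying one lemma to each factor. (Note that if one reads the definition of a control synthesis region strictly — all member configurations sharing a single orientation — then the theorem is already Lemma~\ref{lemma:similar}; the content here is the general case in which the orientation is also perturbed.) Concretely, I would introduce the intermediate configuration $q_m=(p_\Delta,\Omega)$, which shares its orientation with $q$ and its position with $q_\Delta$. Since $q$ and $q_\Delta$ are nearby, $q_m$ is nearby to both; and since a control synthesis region is the set of configurations whose shortest path carries a fixed control word (hence a relatively open set in the appropriate slice), a configuration lying between two members of the region is again a member. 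Establishing that $q_m$ stays in the common region is the first point that genuinely needs an argument, because control synthesis regions need not be convex or even connected in pathological situations, so the "nearby implies interpolable within the region" step is an assumption the lemma proofs already rely on and that I would make explicit.

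With $q_m$ in hand, I would apply Lemma~\ref{lemma:similar} to the pair $(q,q_m)$, which differ only in position by $\Delta=p_\Delta-p$; this yields $\vec k_{q_m}=\vec k_q+O(\|\Delta\|)$, and by the same bootstrap $\vec c_{q_m}=\vec c_q+O(\|\Delta\|)$. Then I would apply the subsequent lemma (on configurations connected by the same control) to the pair $(q_m,q_\Delta)$, which differ only in orientation and are joined by the rotation realizing $\Omega\mapsto\Omega_\Delta$: in $SE(2)$ this rotation is directly realizable by a control in $U$, so the lemma gives $\vec k_{q_\Delta}=\vec k_{q_m}+O(\|\Omega_\Delta-\Omega\|)$. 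Combining the two estimates by the triangle inequality yields $\vec k_{q_\Delta}=\vec k_q+O(\|p_\Delta-p\|+\|\Omega_\Delta-\Omega\|)$, i.e. nearby configurations in the same region have a similar $\vec k$ (and, repeating the substitution used inside the lemmas, a similar $\vec c$). Crucially, no new analysis of the necessary condition Eq.~\eqref{eq:nc_generic} is required: the per-factor estimates are exactly those already proved.

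The main obstacle I anticipate is the three-dimensional orientation step: a single rotation control has a fixed axis, so a generic nearby orientation is \emph{not} reachable by one control of $U$, and the clean single-intermediate picture must be replaced by a chain $q=q^{(0)},q^{(1)},\dots,q^{(N)}=q_\Delta$ in which consecutive members either differ only in position (handled by Lemma~\ref{lemma:similar}) or are connected by one control of $U$ (handled by the following lemma). I would argue that such a chain exists with $N$ bounded independently of the perturbation size — e.g. fixing in advance a finite set of rotation controls whose axes span $\mathfrak{so}(3)$ — so that each step contributes an $O(\cdot)$ change to $\vec k$ with a constant that does not blow up as the steps shrink, and then summing the $N$ contributions. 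Making this chain honest, i.e. checking that every intermediate configuration remains in the same control synthesis region and that the per-step constants stay uniform, is where the real care lies; everything else is the triangle inequality applied to estimates already in hand.
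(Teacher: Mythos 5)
Your proposal takes essentially the same route as the paper: the paper's entire proof is the single sentence that the theorem ``follows from the previous two Lemmas,'' which is exactly the combination you carry out. You in fact go further than the paper by making explicit the intermediate-configuration decomposition, the requirement that intermediate configurations remain in the region, and the difficulty that a generic 3D orientation perturbation is not realized by a single control --- all genuine gaps the paper leaves unaddressed --- so your version is a strictly more careful rendering of the same argument.
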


\begin{proof}
The theorem follows from the previous two Lemmas. 
\end{proof}

Though it can be shown that both $\vec{k}$ and $\vec{c}$ are similar for nearby configuration on the same slice, it was shown in the previous section that it is sufficient just to consider $\vec{k}$. For simplicity, we will omit the $\vec{c}$ term in the remaining sections. 


We consider control synthesis for configurations with the same orientations, i.e., the same configuration slice, to the goal at the origin. Let there be a start $q_s = (s_o, s_x, s_y)$ on the control synthesis switching curve, which means there are two controls dot products with the {\em control moment} to the same $H$. For a nearby configuration $q_s + \Delta s = (s_o + \Delta_x s, s_x + \Delta_y s, s_y + \Delta_z s)$, if $\Delta s$ is along the gradient of the switching curve at $q_s$, and if $\|\Delta s\|$ is small enough, the $q_s + \Delta s$ should also on the switching curve or at least arbitrarily close to it. The gradient of the switching curve and the gradient of the constraints with respect to the configuration are related. First, we look at the gradient form for a translation control constraint. 
\begin{eqnarray}
k\cdot v_i = 1\\
h(q): k\cdot v_i - 1 = 0\\
\grad h(q) = 0
\end{eqnarray}

In other words, the translation control does not depend on the location of the configuration. The gradient is degenerate. A degenerate gradient means that walking in any direction from the original configuration and the new configuration does not necessarily violate the optimal constraint for translation control. The translation on the shortest path is only constrained by the orientation and is independent of the position. On the same configuration slice, any nearby configuration can potentially have the shortest path starting with the same translation. However, through analysis, we know that only configurations with $\Delta s$ parallel to the linear velocity can potentially remain on this switching curve.

\begin{eqnarray}
h(q) : k\cdot(\hat\omega\times\vec{rg}) - 1 = 0\\
r = q + R(q)\cdot r^R\\
h(q) : k\cdot(\hat\omega\times(g - q - R(q)\cdot r^R)) - 1 = 0\\
\Rightarrow \grad_q h(q) = \begin{pmatrix}
-k_2\omega_z + k_3\omega_y \\
k_1\omega_z - k_3\omega_x \\
-k_1\omega_y + k_2\omega_x
\end{pmatrix} = \omega\times\vec{k}\label{eq:grad_q}
\end{eqnarray}

Above, we have the gradients of the rotation control constraint. Here, we use $\hat{g} = g - R(q) r^R$. The gradient is not trivial. If a configuration is on the switching curve, the new configuration must still satisfy both controls' constraints and share the new $k$ vector. When considering the same orientation slice of the configuration space, rotation axes $\omega$ will not change direction. 

However, the gradient of the constraints with respect to the position is not the tangent of the switching curve. The gradient for each control is the best direction to increase the dot product, rather than maintaining the constraints. 
Let us consider a change of position of $\Delta$ on the configuration slice. We have, 
\begin{eqnarray}
k\cdot(\hat\omega\times\vec{rg}) = 1\\
k'\cdot(\hat\omega\times\vec{r'g}) = 1\\
r'-r = \Delta, k'-k = \Delta k\\
\Delta ((k+\Delta k)\times\hat\omega) = \Delta k(\hat\omega\times\vec{rg})\label{eq:delta_relation}
\end{eqnarray}

This would also require the $\Delta k$ vector to verify whether there exist $\Delta$ direction to satisfy the condition. We take the gradient of the constraint with respect to $\vec{k}$, and have, 
\begin{eqnarray}
\grad_k h(q) = \begin{bmatrix}
\omega_y(\hat{g}_z-z) - \omega_z(\hat{g}_y-y) \\ 
\omega_z(\hat{g}_x-x) - \omega_x(\hat{g}_z-z)\\
\omega_x(\hat{g}_y-y) - \omega_y(\hat{g}_x-x)
\end{bmatrix} = \omega\times\vec{rg}\label{eq:grad_k}
\end{eqnarray}
The change of $k$ is along the direction of the control moment. We can then take the $\grad_k h(q)$ for both switching controls. Different controls will have different $\grad_k h(q)$. However, if a configuration is on the switching curve between two controls, then there does not exist a $\grad_k$ that can simultaneously increase the dot product with the control moment for all three controls, following the maximization condition from PMP. Therefore, we first present the {\em Necessary Condition} for a configuration being on a switching curve: there does not exist $\Delta k$ not parallel to $k$ that can increase the dot product with all control moments. 

The $\grad_k$ is the direction to increase the dot product with the control moment. Regardless of the change direction of $\Delta$, there exist linear combination of $\grad_k$ vectors corresponding to the new $\vec{k}$. On the other hand, the $\grad_q$ of the constraint is the change direction of the rotation center to increase the dot product with the current $\vec{k}$. We need to find the direction for which the change of the control moment for both switching controls will lead to vectors that are possible to maintain a constant dot product with a new $\vec{k}$.  Eq.~\eqref{eq:delta_relation} is stating this constraint for a single control. If a $\Delta k$ and $\Delta$ can satisfy Eq.~\eqref{eq:delta_relation} for both switching controls, the switching curve gradient is found. Jointly solving both equations, we have
\begin{eqnarray}
\Delta\cdot\left[ (k+\Delta k)\times(\hat\omega_i-\hat\omega_j)\right] = \Delta k(\hat\omega_i\times\vec{r_ig} - \hat\omega_j\times\vec{r_jg})
\end{eqnarray}
We have $\hat\omega_i-\hat\omega_j$ being a constant equal to the vector between them in the robot frame. On the right-hand side, values in the parenthesis are also known at the current configuration. Simplifying the equation, we have
\begin{eqnarray}
\Delta\cdot\left[(k+\Delta k)\times\hat\omega_{i, j}\right] = \Delta k\cdot\vec{d}\label{eq:joint_deltas}
\end{eqnarray}
where both $\hat\omega_{i, j}$ and $\vec{d}$ are known constant vectors. We can use this to test if there exists $\Delta$. Combining the $k+\Delta k$ must maintain a new dot product with the last control, the same as the two controls. We can find the relation between $\Delta k$ and $\Delta$. Using the relation with the last control, we can solve for the relation between two weights for two $\grad_k$. Then, we can use the above equation to test if $\Delta$ exists. Furthermore, if such a direction exists, we can select the direction of $\Delta$ as the switching curve's gradient. 

Further, we can show that, inside a control synthesis region, there does not exist $\Delta k$ that satisfies the above constraint. One of the $\grad_k$ can produce the same dot product with one of the switching controls and the last control. The other switching control increases with this $\grad_k$, but not as much. Different $\grad_k$ cannot increase all three control moments on the switching curve.

\section{Proof of Concept}

Consider a simple Dubins' vehicle in 2D. We know the optimal solutions for the Dubins' vehicle and its control synthesis. We can use the known control synthesis to verify our necessary condition. Let us first consider the configuration slice of $q = (x, y, 0)$. For this case, let us first consider the switching curve between Left-Straight-Right (LSR), and the Right-Translation-Right (RSR), using the same notation from~\citet{bui1994shortest}. Most of the circle with center $(0, 1)$ is on this switching curve. Let us check the configurations $(1, 1, 0)$ and $(0, 1, 0)$ . The first configuration is on the switching curve, while the last one is not on the switching curve.  Because the switching curve is between LSR and RSR, the difference is the first control, so the switching curve is between the left rotation control and the right rotation control. At this configuration, both the left and right rotation, and the last control, which is also a right rotation, must all maintain a constant dot product of $1$ with $\vec{k}$. We can then derive $\vec{k} = (1, 1, \cdot)$. The $k$ satisfies all three controls. A $\Delta k$ direction does not exist that increases the dot product with all control moments. 

We can apply the same test to configuration $(0, 1, 0)$, which is not on the switching curve. We can again have the same gradient to $q$ for both controls. However, there does not exist a $k$ that satisfies the three control moments. So, the first test fails. Of course, this configuration and the system are somewhat special. Because for the Dubins vehicle, any configuration would admit a turn-drive-turn trajectory, there would always exist a valid $\vec{k}$ satisfying constraints for a turn-drive-turn path. These paths may not be the shortest but is always a feasible trajectory.  We then need to conduct further tests, to verify if a configuration is on the switching curve. Let us test the $\Delta k$ vector at these configurations. For example, at $(1, 1, 0)$, the two controls will lead to two different $\Delta k$, but only $(0, 1, 0)$ direction will lead to a feasible $\Delta$ for using Eq.~\eqref{eq:delta_relation}. The equation that constraints $\Delta$ becomes $(1+\|\Delta k\|)\Delta_x - \Delta_y = -\|\Delta k\|$, where $c$ depends on the scale of $\Delta k$. At this position, we know the gradient is following the circle, and if we let $\Delta_y = \sin\theta$ and $\Delta_x = 1-\cos\theta$, the constraint is satisfied for small $\theta$ and $\|\Delta k\|$.

\begin{figure}[b!]
    \centering
    \includegraphics[width=3in]{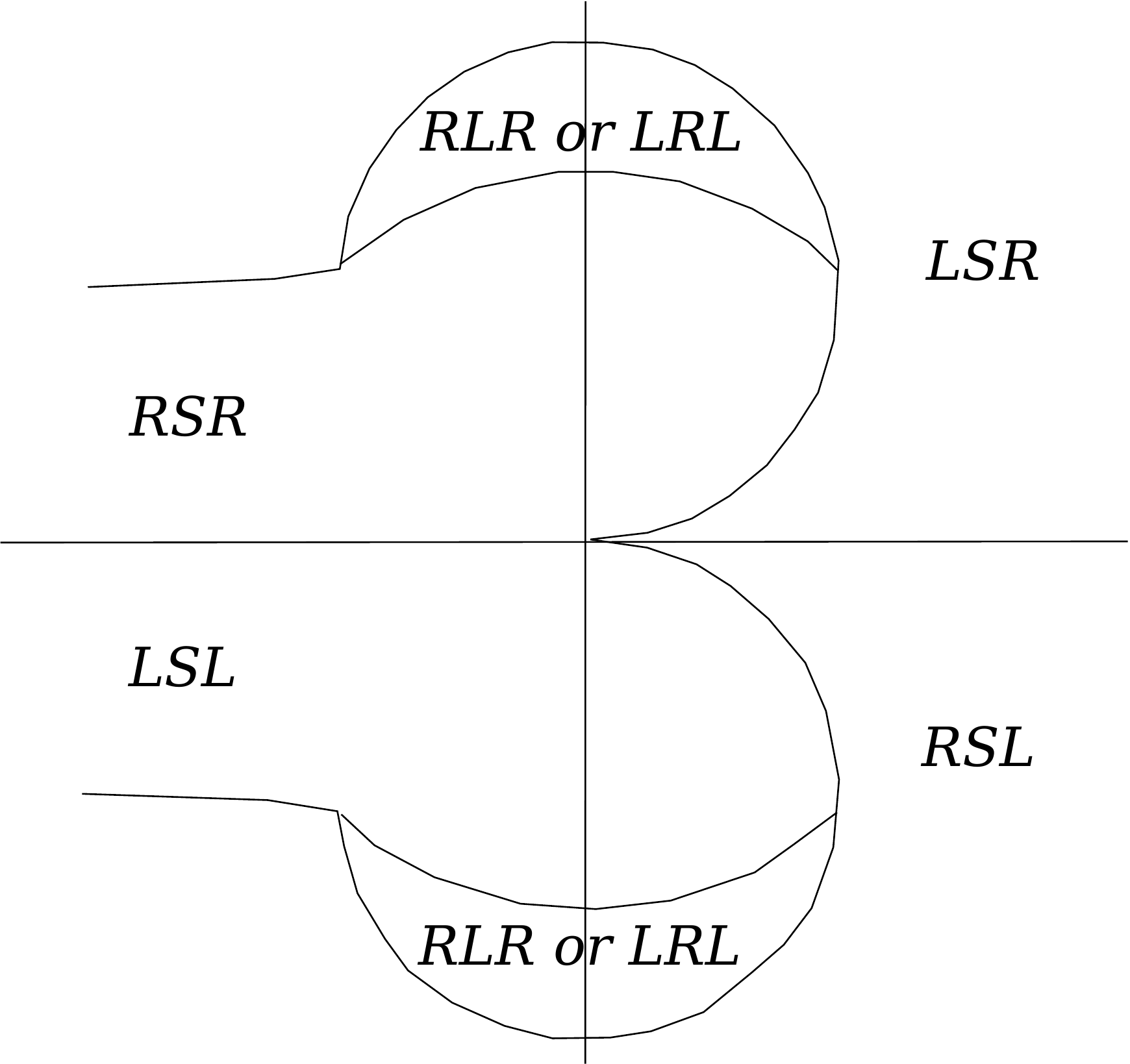}

\vspace{-0.2in}   
    
    \caption{The switching curve and control synthesis region for Dubins' vehicle, configuration slice $(x, y, 0)$, redrawn from~\citet{bui1994shortest}.}
    \label{fig:slice-0}\vspace{-0.3in}
\end{figure}

For another configuration, such as $(0.5, 0.4, 0)$ not on a switching curve. We can derive a feasible $k$ of $(1, 0.8, 0)$. The $\Delta k$ is either along $(1.4, -0.5)$ or along $(0.6, 0.5)$. Setting the $\Delta k = \alpha\cdot(1.4, -0.5)^T + \beta\cdot(0.6, 0.5)^T$, leading to $\beta = -\frac{7}{3}\alpha$. But $\grad_k = (0.6, 0.5)$ have a larger control moment with all three controls. The dot product with two right rotation control moments is equal. The dot product with the left rotation control moment is increasing but smaller than the other two. As shown in Figure~\ref{fig:slice-0}, the current configuration is inside the RSR region, and the two right rotation controls can maintain a constant dot product even after moving $k$ by $\Delta k$. In other words, there is better $k$ than the original $k$ for the two right rotations. Therefore, in this configuration, the left rotation should not be selected. The analysis also indicates that on the switching curve, there exists $\Delta k$ that can move all controls to an identical dot product after $\Delta$, and at the current location, the updated $k$ will not push all control moments to the same direction. This can also be verified at configuration $(1, 1, 0)$.


Combining the above results, we can now propose a new way to find configurations on the switching curve and finding corresponding control synthesis region.
\begin{enumerate}
    \item At a given configuration, we assume it is on a switching curve, between some controls $u_i$, and $u_j$, and the last control $u_k$.
    \item Test if there exist feasible $k$ that allow these three controls to satisfy $k\cdot(\hat\omega_i\times\vec{rg}) = 1$ for rotation, or $k\cdot v_i = 1$ for translation. 
    \item If no such $k$, it is not on a switching curve. If it satisfies, need further testing. 
    \item Find potential basis for $\Delta k$ using Eq.~\eqref{eq:grad_k}, for all controls. 
    \item Set $\Delta k = \sum_i \alpha_i \grad_k(u_i))$ where $\grad_k(u_i)$ is the gradient with respect to $k$ for $u_i$. 
    \item Test if there exist $\grad_k$ that increases the dot product with all control moments. 
    \item If yes, then not on the switching curve, but inside a control synthesis region. We can identify the  first and last control for the control synthesis region. Simulate and find the corresponding control synthesis region control sequence. Break.
    \item Potentially on a switching curve. Use $u_k$ to derive the relation among $\alpha_i$. 
    \item Bring $\Delta k$ into Eq.~\eqref{eq:joint_deltas}, find feasible $\Delta$ relation. 
    \item Find a $\Delta$ direction that admit same dot product with all three control moments, and go back to step 4. 
\end{enumerate}

There may also exist a new way to derive the shortest path and corresponding $k$. Following a similar procedure, assuming the configuration is on the switching curve. Then, we derive the $\Delta k$ and test if some control choices can be better than the others. If so, we can derive the new $k$ vector and find a better feasible path. If a feasible $k$ cannot be found at the beginning, we can just use two controls to derive an equation for $k$ to satisfy and then take the derivatives of $k$ to find the direction $\Delta k$ to increase the dot product with these controls. This procedure works well with Dubins' vehicle and Reeds-Shepp's car but needs further testing and proof of correctness.

\newpage

\section{Conclusion}

This work presents the necessary conditions for configurations inside and on the boundary of control synthesis regions. Using Pontraygon's Maximum Principle (PMP), we first show that a three-dimensional search is sufficient to find the shortest path for any given pair of start and goal. Then, we proved that the nearby configurations would have similar constant vectors $\vec{k}$ to maintain the constant dot product with the control moment. Further, we show the necessary conditions for a configuration to be on the switching curve. We show that the gradients of the constraints for two controls must agree for a configuration to be on a switching curve. We further derive a procedure to quickly walk towards a switching curve if the shortest path and the corresponding $\vec{k}$ are known for a given configuration. 

One of the main directions for future work is to prove the procedure's correctness for finding control synthesis and the shortest path presented at the end of the last section. The current procedure depends on knowing the shortest path between a start and the goal (origin), and the current shortest path algorithm is a three-dimensional search and can be slow. Another future work is to include more examples and experiments for different systems. 

The work verifies the necessary condition for the control synthesis boundaries. However, the details of generalizing such verification to a complete process to plot out the control synthesis region are not yet shown. More experiments and accompanying details of the approach will be the focus of the future work. 
\bibliographystyle{plainnat}
\bibliography{refs}

\end{document}